\documentclass{article}

\usepackage[letterpaper,margin=1in]{geometry}
\usepackage{times}
\usepackage[utf8]{inputenc}
\usepackage[T1]{fontenc}
\usepackage{amsmath,amssymb,amsfonts,amsthm,bm}
\usepackage{booktabs,multirow,array}
\usepackage[round]{natbib}
\usepackage{hyperref}
\hypersetup{colorlinks=true,linkcolor=blue,citecolor=blue,urlcolor=blue}
\usepackage{tikz}
\usetikzlibrary{positioning,arrows.meta,calc,fit}

\definecolor{ink}{HTML}{1C2230}
\definecolor{mutedgray}{HTML}{5B6472}
\definecolor{paperteal}{HTML}{0F6B5F}
\definecolor{paperviolet}{HTML}{6B3FA0}
\definecolor{paperred}{HTML}{C0392B}
\definecolor{lightline}{HTML}{D8D2C4}
\definecolor{tealbg}{HTML}{EEF6F4}
\definecolor{violetbg}{HTML}{F0E9F8}
\definecolor{warmbg}{HTML}{F4F2EC}
\definecolor{sand}{HTML}{E9E5DB}
\definecolor{sanddark}{HTML}{B3ACA0}

\newcommand{\CGA}{\textup{\textsc{H-CGA}}}
\newcommand{\E}{\mathbb{E}}
\newcommand{\Prob}{\mathbb{P}}
\newcommand{\Normal}{\mathcal{N}}
\newcommand{\Uniform}{\mathcal{U}}

\newcommand{\TopK}{\operatorname{TopK}}
\newcommand{\ind}{\mathbf{1}}
\newcommand{\stopgrad}{\operatorname{stopgrad}}
\newcommand{\R}{\mathbb{R}}

\newtheorem{theorem}{Theorem}
\newtheorem{corollary}{Corollary}
\newtheorem{proposition}{Proposition}

\title{Hierarchical Copula-Gumbel-Top-\texorpdfstring{$K$}{K} Routing:\\
Two-Sided Dependence Control for Frozen Mixture-of-Experts\\
at Fixed Per-Token Routing Laws}

\author{Richard Yi Da Xu\\
Hong Kong Baptist University and TadReamk Limited\\
\texttt{xuyida@hkbu.edu.hk}, \texttt{richard@tadreamk.com}}
\date{}

\begin{document}
\maketitle

\begin{abstract}
A stochastic Gumbel-Top-$K$ router defines, for every token of a
mixture-of-experts (MoE) model, a \emph{routing law}: a distribution over
ordered expert lists and mixture weights.  We ask which \emph{joint}
distributions over the routing choices of different tokens are reachable
while every individual token's complete routing law is held exactly fixed.
We give a two-sided construction, \emph{Hierarchical Copula-Gumbel-Top-$K$}
(\CGA{}).  Within a group of related tokens, an exchangeable Gaussian copula
positively correlates the Gumbel perturbations at each expert coordinate,
which can increase within-group expert-set coherence.  Across disjoint pairs of
groups, a tunable antithetic construction introduces a selectable amount of
negative dependence.  We prove
that both operations leave each token's ordered Top-$K$ sample, mixture
weights, and inclusion probabilities identical in distribution to independent
routing \emph{at a routing layer conditioned on its pre-routing logits};
conditional expected expert traffic is preserved as a consequence.  We characterize the resulting
trade-off: positive within-group coupling can only inflate the variance of
realized expert loads relative to independent routing, while nonnegative
cross-group opposition can only reduce it relative to flat coupling at the
same within-group strength.  Coherence and load dispersion are thus controlled
by two complementary dependence dials on the invariance constraint surface.
Because the base model is
untouched, the dials can be driven by a small controller over frozen features,
trainable with a score-function estimator: the frozen network is evaluated
only in the forward direction, and gradients are confined to the controller.
An initial small-scale pilot validates the mechanism and the training route,
but does not establish task-level fine-tuning gains.
\end{abstract}

%==============================================================================
\section{Introduction}
%==============================================================================
Sparse MoE layers scale language models by evaluating only a few experts for
each token \citep{shazeer2017outrageously,lepikhin2021gshard,fedus2022switch}.
Modern MoEs commonly use top-$K$ routing: a router scores all experts, sends a
token to its $K$ highest-scoring experts, and combines their outputs.  Under
the standard stochastic formulation, Gumbel-Top-$K$ sampling
\citep{kool2019gumbel}, the router assigns each token a routing law---a
Plackett--Luce distribution over ordered expert lists.  Nearly all work on MoE
routing modifies this per-token law: fine-tuning changes the logits, auxiliary
losses reshape the gates, and similarity-aware routers alter individual
selections \citep{nguyen2025graphtokens,omi2025simbal}.

We study a different, largely unexamined degree of freedom.  Under
independent stochastic routing, the joint routing distribution of a batch is
a \emph{product measure} over tokens---a default nobody chose
(Figure~\ref{fig:scatter}, top).  Holding every
token's routing law \emph{exactly} fixed, the joint distribution over the
choices of different tokens is still free: Sklar's theorem separates marginals
from dependence \citep{sklar1959fonctions,nelsen2006copulas}, and the routing
choices of a frozen MoE are a collection of discrete marginals awaiting a
dependence structure.  Figure~\ref{fig:marginals-joint} makes the degree of
freedom concrete: two joint laws with identical per-token marginals can
differ arbitrarily in how the tokens' choices co-move, and every dial
practitioners tune today---auxiliary balance losses
\citep{shazeer2017outrageously}, logit biasing, capacity tuning---moves the
marginals, not the dependence.  This paper asks: \emph{which joint routing
behaviors are reachable on this invariance constraint surface, and what do
the reachable extremes trade off?}

\begin{figure}[t]
\centering
\resizebox{\textwidth}{!}{%
\begin{tikzpicture}[font=\small,
  tok/.style={draw=sanddark,fill=white,rounded corners=2pt,minimum width=1.5cm,minimum height=0.5cm},
  exoff/.style={draw=lightline,fill=white,rounded corners=2pt,minimum width=1.05cm,minimum height=0.5cm,text=sanddark},
  lk/.style={opacity=0.75}]
% ---------- panel (a): independent ----------
\begin{scope}
  \node[font=\small\bfseries,paperred] at (3.6,3.15) {(a) independent routing (today's default)};
  \node[tok] (a1) at (0,2.4) {\texttt{for}};
  \node[tok] (a2) at (0,1.7) {\texttt{(int}};
  \node[tok] (a3) at (0,1.0) {\texttt{i=0;}};
  \node[tok] (a4) at (0,0.3) {\texttt{i<n;}};
  \node[draw=sanddark,dashed,rounded corners=4pt,fit=(a1)(a4),inner sep=5pt] (agrp) {};
  \node[mutedgray,font=\scriptsize,anchor=north] at (agrp.south) {one code phrase (a ``group'')};
  \foreach \i/\y in {1/2.75,2/2.25,3/1.75,4/1.25,5/0.75,6/0.25}{
    \node[exoff,draw=paperred,fill=paperred!8,text=paperred] (ae\i) at (6.6,\y) {E\i};}
  \draw[lk,paperred] (a1.east) -- (ae1.west);
  \draw[lk,paperred] (a1.east) -- (ae4.west);
  \draw[lk,paperred] (a2.east) -- (ae3.west);
  \draw[lk,paperred] (a2.east) -- (ae6.west);
  \draw[lk,paperred] (a3.east) -- (ae2.west);
  \draw[lk,paperred] (a3.east) -- (ae5.west);
  \draw[lk,paperred] (a4.east) -- (ae1.west);
  \draw[lk,paperred] (a4.east) -- (ae6.west);
  \node[paperred,font=\scriptsize,align=center] at (6.6,-0.95)
    {6 distinct experts touched:\\ no reuse across the phrase};
\end{scope}
% ---------- panel (b): coupled ----------
\begin{scope}[xshift=8.7cm]
  \node[font=\small\bfseries,paperteal] at (3.6,3.15) {(b) copula-coupled routing (this paper)};
  \node[tok] (b1) at (0,2.4) {\texttt{for}};
  \node[tok] (b2) at (0,1.7) {\texttt{(int}};
  \node[tok] (b3) at (0,1.0) {\texttt{i=0;}};
  \node[tok] (b4) at (0,0.3) {\texttt{i<n;}};
  \node[draw=paperteal,dashed,rounded corners=4pt,fit=(b1)(b4),inner sep=5pt] (bgrp) {};
  \node[mutedgray,font=\scriptsize,anchor=north] at (bgrp.south) {same group, same marginals};
  \foreach \i/\y in {1/2.75,3/1.75,6/0.25}{
    \node[exoff] (be\i) at (6.6,\y) {E\i};}
  \node[draw=paperteal,fill=tealbg,rounded corners=2pt,minimum width=1.05cm,minimum height=0.5cm,text=paperteal,thick] (bh2) at (6.6,2.25) {E2};
  \node[draw=paperteal,fill=tealbg,rounded corners=2pt,minimum width=1.05cm,minimum height=0.5cm,text=paperteal,thick] (bh4) at (6.6,1.25) {E4};
  \node[draw=paperteal,fill=tealbg,rounded corners=2pt,minimum width=1.05cm,minimum height=0.5cm,text=paperteal,thick] (bh5) at (6.6,0.75) {E5};
  \draw[lk,paperteal] (b1.east) -- (bh2.west);
  \draw[lk,paperteal] (b1.east) -- (bh5.west);
  \draw[lk,paperteal] (b2.east) -- (bh2.west);
  \draw[lk,paperteal] (b2.east) -- (bh5.west);
  \draw[lk,paperteal] (b3.east) -- (bh2.west);
  \draw[lk,paperteal] (b3.east) -- (bh4.west);
  \draw[lk,paperteal] (b4.east) -- (bh2.west);
  \draw[lk,paperteal] (b4.east) -- (bh5.west);
  \node[paperteal,font=\scriptsize,align=center] at (6.6,-0.95)
    {3 distinct experts touched:\\ coherent, cache-friendly};
\end{scope}
\end{tikzpicture}}
\caption{The problem: every token rolls its own dice.  (a)~Under independent
Gumbel-Top-$2$ routing, the tokens of one code phrase scatter across six
distinct experts---the joint routing law is a product measure,
$\Prob(\text{routing})=\prod_t p_t(e_t)$.  (b)~Coupling the routing noise
draws makes related tokens reuse experts,
$\Prob(\text{routing})\neq\prod_t p_t(e_t)$, while each token's own routing
law $p_t$---and hence every expected expert load---is exactly unchanged
(Theorem~\ref{thm:law}).}
\label{fig:scatter}
\end{figure}

\begin{figure}[t]
\centering
\begin{tikzpicture}[font=\small,x=0.5cm,y=0.5cm]
% ---------- shared marginal p = (0.35,0.25,0.25,0.15) ----------
% panel macro drawn twice with different cell shades
% (a) independent product measure, red
\begin{scope}
  \node[font=\small\bfseries,paperred,anchor=south] at (2,5.4) {independent---today's default};
  % column marginal bars (identical in both panels)
  \foreach \j/\h in {0/1.0,1/0.714,2/0.714,3/0.429}
    \fill[mutedgray,opacity=0.45] (\j+0.18,4.15) rectangle (\j+0.82,4.15+\h);
  % row marginal bars
  \foreach \i/\h in {0/1.0,1/0.714,2/0.714,3/0.429}
    \fill[mutedgray,opacity=0.45] (4.2,4-\i-0.82) rectangle (4.2+\h,4-\i-0.18);
  % joint cells (shade percent of paperred)
  \foreach \i/\row in {0/{65,53,53,40},1/{53,44,44,34},2/{53,44,44,34},3/{40,34,34,27}}
    \foreach [count=\jj from 0] \s in \row
      \fill[paperred!\s] (\jj,3-\i) rectangle (\jj+1,4-\i);
  \draw[white,line width=0.8pt] (1,0) -- (1,4) (2,0) -- (2,4) (3,0) -- (3,4)
    (0,1) -- (4,1) (0,2) -- (4,2) (0,3) -- (4,3);
  \draw[lightline,thick] (0,0) rectangle (4,4);
  \node[mutedgray,font=\scriptsize,anchor=north] at (2,-0.1) {expert of token $t'$};
  \node[mutedgray,font=\scriptsize,rotate=90,anchor=south] at (-0.25,2) {expert of token $t$};
\end{scope}
% middle text
\node[align=center,font=\small] at (10.2,2.6)
  {\textbf{keep the marginals}\\[1pt]
   \textcolor{paperteal}{\textbf{change the dependence}}};
\node[align=center,font=\scriptsize,mutedgray] at (10.2,0.9)
  {existing dials move the bars;\\ a copula re-arranges the cells};
% (b) coupled, teal
\begin{scope}[xshift=7.8cm]
  \node[font=\small\bfseries,paperteal,anchor=south] at (2,5.4) {coupled---what we want};
  \foreach \j/\h in {0/1.0,1/0.714,2/0.714,3/0.429}
    \fill[mutedgray,opacity=0.45] (\j+0.18,4.15) rectangle (\j+0.82,4.15+\h);
  \foreach \i/\h in {0/1.0,1/0.714,2/0.714,3/0.429}
    \fill[mutedgray,opacity=0.45] (4.2,4-\i-0.82) rectangle (4.2+\h,4-\i-0.18);
  \foreach \i/\row in {0/{100,35,35,27},1/{35,78,30,24},2/{35,30,78,24},3/{27,24,24,55}}
    \foreach [count=\jj from 0] \s in \row
      \fill[paperteal!\s] (\jj,3-\i) rectangle (\jj+1,4-\i);
  \draw[white,line width=0.8pt] (1,0) -- (1,4) (2,0) -- (2,4) (3,0) -- (3,4)
    (0,1) -- (4,1) (0,2) -- (4,2) (0,3) -- (4,3);
  \draw[lightline,thick] (0,0) rectangle (4,4);
  \node[mutedgray,font=\scriptsize,anchor=north] at (2,-0.1) {expert of token $t'$};
  \node[mutedgray,font=\scriptsize,rotate=90,anchor=south] at (-0.25,2) {expert of token $t$};
\end{scope}
\end{tikzpicture}
\caption{Same marginals, different joint.  Two joint routing laws for a pair
of tokens over four experts, drawn on one shared color scale; the gray bars
are the row and column sums (each token's marginal routing law) and are
\emph{identical} in both panels.  Left: the independent product measure.
Right: a convex mixture of the product with the diagonal, which concentrates
mass on ``both tokens pick the same expert'' while leaving every row and
column sum unchanged.  Existing routing dials (auxiliary losses, logit
biasing, capacity tuning) move the bars; a copula re-arranges the cells.}
\label{fig:marginals-joint}
\end{figure}

The question has practical stakes in both directions of dependence.
\emph{Positive} cross-token dependence makes the tokens of a phrase, entity,
or code identifier more likely to reuse the same experts---local coherence
and fewer distinct experts per group.  This may improve expert locality,
depending on the execution system.  But bunching grouped tokens onto
shared experts makes their inclusion counts positively correlated, so
realized per-expert loads become burstier even though conditional expected
loads at that layer are provably unchanged.  \emph{Negative} cross-group dependence pushes the other
way: anticorrelating the demand of different groups reduces the variance of
realized loads.  We show both directions are simultaneously available in one
hierarchical construction, and both preserve every token's routing law.

We propose \CGA{}, shown in Figure~\ref{fig:architecture}.  The frozen router
produces logits $\ell_{te}$ for token $t$ and expert $e$; ordinary stochastic
routing adds independent Gumbel noises and takes the top $K$ perturbed
logits.  \CGA{} organizes the noise hierarchically.  Within a group $g$ of
tokens, the noises at each expert coordinate share a group latent
$\zeta_{ge}$ through an exchangeable Gaussian copula: related tokens receive
positively correlated perturbations, expert by expert.  Across a disjoint
pair of groups $(g,g')$, the latents have a tunable antithetic relationship
with opposition strength $\alpha_{g,g'}\in[0,1]$.  At
$\alpha_{g,g'}=1$, whatever random push a pair member gives its tokens toward
expert $e$, its partner receives the opposite push; at $\alpha_{g,g'}=0$, the
two groups are independent.  The noise vector of any single token retains its
original i.i.d.\ Gumbel distribution throughout.

The method leaves each token's marginal selection distribution unchanged and
acts only on the cross-token dependence of the routing noise: positive
coordination raises the tendency of related tokens to make matching random
selections, while negative coordination reduces the tendency of distinct
groups to do so simultaneously. For the stochastic router studied here, this
dependence can be introduced without altering any single token's routing law.

\begin{figure}[t]
\centering
\begin{tikzpicture}[
  node distance=1.0cm and 0.65cm,
  frozen/.style={draw=mutedgray,dashed,fill=warmbg,rounded corners,align=center,minimum height=0.8cm,inner sep=4pt},
  arr/.style={-{Stealth[length=2mm]},thick,mutedgray}
]
\node[frozen] (hidden) {frozen hidden\\states $h_t$};
\node[frozen,right=of hidden] (router) {frozen router\\logits $\ell_t$};
\node[draw=paperteal,thick,fill=tealbg,rounded corners,align=center,minimum height=0.8cm,inner sep=4pt,right=of router] (controller) {trainable controller\\$a_g,\rho_g,\alpha_{g,g'}$};
\node[draw=paperviolet,thick,fill=violetbg,rounded corners,align=center,minimum height=0.8cm,inner sep=4pt,below=of controller] (copula) {hierarchical copula\\Gumbel noises};
\node[frozen,left=of copula] (topk) {frozen\\Gumbel-Top-$K$};
\node[frozen,left=of topk] (experts) {frozen experts\\and weighted sum};
\draw[arr] (hidden)--(router);
\draw[arr,paperteal] (router)--(controller);
\draw[arr] (router.south)--(topk.north);
\draw[arr,paperteal] (controller)--(copula);
\draw[arr,paperviolet] (copula)--(topk);
\draw[arr] (topk)--(experts);
\end{tikzpicture}
\caption{\CGA{} as a routing-side dependence layer.  Only the controller is
trainable.  The base router, experts, and mixture rule are frozen.  Positive
within-group coupling and tunable cross-group opposition are both generated
in the hierarchical copula stage; the pairing is a fixed, pre-sampling design
choice, while the controller may set its opposition strength
$\alpha_{g,g'}$.  At $a_g=0$ the sampler is independent
Gumbel-Top-$K$ routing exactly.}
\label{fig:architecture}
\end{figure}

\paragraph{Contributions.}
\begin{enumerate}
  \item We introduce \CGA{}, a hierarchical copula layer over the routing
  noise of a frozen stochastic Gumbel-Top-$K$ MoE that controls cross-token
  dependence in both directions---positive within-group coupling for local
  expert-set coherence and tunable negative cross-group coupling for
  load-variance control---without touching router logits, experts, or any per-token
  routing law.
  \item We prove \emph{full per-token routing-law invariance} for the entire
  hierarchy: the selected ordered Top-$K$ list and gate-based mixture weights
  of every token have the same distribution as under independent
  Gumbel-Top-$K$ routing.  Consequently, conditional expected
  expert-inclusion counts at that layer are preserved
  (Theorem~\ref{thm:law}, Corollaries~\ref{cor:load} and~\ref{cor:hier}).
  \item We characterize the coherence--dispersion trade-off on the invariance
  surface (Proposition~\ref{prop:variance}): relative to independent routing,
  flat positive coupling can only increase the variance of realized expert
  loads; relative to flat coupling at the same strength, every nonnegative
  cross-group opposition strength can only decrease it.  Conditional expected
  loads are identical in all schemes.
  \item We formulate routing-only adaptation as an application: a small
  controller reads frozen features and sets the dependence dials, and is
  trainable with a score-function estimator that evaluates the frozen base
  model only in the forward direction.
\end{enumerate}

%==============================================================================
\section{Background and Problem Setting}\label{sec:background}
%==============================================================================
\paragraph{Top-$K$ MoE routing.}
Let a frozen router map token representation $h_t$ to logits
$\ell_t=(\ell_{t1},\ldots,\ell_{tE})$; Figure~\ref{fig:moe} sketches the
layer.  A deterministic top-$K$ router selects
the $K$ largest entries.  We instead use the standard stochastic
Gumbel-Top-$K$ law: draw $\gamma_{te}\overset{\mathrm{iid}}{\sim}
\mathrm{Gumbel}(0,1)$ and let
\begin{equation}
  (r_{t1},\ldots,r_{tK})
  = \TopK_e\{\ell_{te}+\gamma_{te}\},
  \qquad S_t=\{r_{t1},\ldots,r_{tK}\}.
  \label{eq:gumbel-topk}
\end{equation}

\begin{figure}[t]
\centering
\begin{tikzpicture}[font=\small,
  tok/.style={draw=sanddark,fill=white,rounded corners=2pt,minimum width=1.9cm,minimum height=0.52cm},
  arr/.style={-{Stealth[length=1.8mm]},mutedgray,semithick}]
% tokens
\node[tok] (t1) at (0,2.55) {``The};
\node[tok] (t2) at (0,1.85) {Riemann};
\node[tok] (t3) at (0,1.15) {zeta};
\node[tok] (t4) at (0,0.45) {function''};
\node[mutedgray,font=\scriptsize] at (0,-0.2) {tokens $t=1,\dots,T$};
% router
\node[draw=paperteal,fill=tealbg,thick,rounded corners=3pt,minimum width=2.7cm,minimum height=1.35cm,align=center] (router) at (3.7,1.5)
  {\textbf{Router}\\ $\ell_t = W_r h_t\in\R^{E}$};
\node[mutedgray,font=\scriptsize,anchor=north] at (router.south) {logits per expert};
\foreach \s in {t1,t2,t3,t4} \draw[arr] (\s.east) -- (router.west);
% top-k arrow
\draw[-{Stealth[length=2mm]},ink,thick] (router.east) -- node[above,font=\scriptsize\bfseries]{Top-$K$} ++(1.15,0);
% experts
\foreach \i/\x in {1/6.9,2/8.05,3/9.2,4/10.35,5/11.5,6/12.65}{
  \node[draw=lightline,fill=white,rounded corners=3pt,minimum width=1.0cm,minimum height=1.35cm,align=center,text=sanddark] (e\i) at (\x,1.5) {E\i\\[-2pt]{\scriptsize FFN}};}
\node[draw=paperteal,fill=tealbg,very thick,rounded corners=3pt,minimum width=1.0cm,minimum height=1.35cm,align=center,text=paperteal] at (8.05,1.5) {E2\\[-2pt]{\scriptsize FFN}};
\node[draw=paperteal,fill=tealbg,very thick,rounded corners=3pt,minimum width=1.0cm,minimum height=1.35cm,align=center,text=paperteal] at (11.5,1.5) {E5\\[-2pt]{\scriptsize FFN}};
\node[mutedgray,font=\scriptsize,align=center] at (9.775,0.35)
  {only $K=2$ of $E$ experts run per token $\Rightarrow$ sparse compute};
\end{tikzpicture}
\caption{Mixture-of-experts routing.  A router scores all $E$ experts for
each token; only the top $K$ run, and their outputs are combined by the
gate weights of \eqref{eq:mixture}.  The layer output $y_t$ re-enters the
residual stream.  Sparse conditional compute buys the capacity of $E$
experts at the cost of $K$---provided the load across experts stays
balanced.}
\label{fig:moe}
\end{figure}

The ordered list is an exact Plackett--Luce sample without replacement
\citep{kool2019gumbel}: writing $Z_t=\sum_j\exp(\ell_{tj})$,
\begin{equation}
  \Prob(r_{t1}=e_1,\ldots,r_{tK}=e_K)
  = \prod_{k=1}^{K}
    \frac{\exp(\ell_{te_k})}{Z_t-\sum_{m<k}\exp(\ell_{te_m})},
  \label{eq:plackett-luce}
\end{equation}
i.e.\ softmax sampling without replacement, one slot at a time.
Figure~\ref{fig:gumbel} illustrates the underlying Gumbel-max mechanism:
perturbing each logit with i.i.d.\ Gumbel noise and taking the argmax is an
exact softmax sample, and taking the top $K$ perturbed logits yields
\eqref{eq:plackett-luce}.  One noise draw per expert produces the whole
ordered list; the noise occasionally promotes a lower logit, and that
randomness is the exploration.  We use gate-based mixture weights

\begin{figure}[t]
\centering
\begin{tikzpicture}[font=\small]
\draw[ink,semithick] (-0.2,0) -- (7.3,0);
% bars: logit (sand) + Gumbel noise (dark) stacks
\foreach \i/\l/\n/\lab in {0/1.45/0.35/1, 1/2.0/0.15/2, 3/1.7/0.5/4, 4/0.75/0.25/5}{
  \fill[sand,draw=sanddark] ({0.3+\i*1.45},0) rectangle ({1.2+\i*1.45},\l);
  \fill[sanddark] ({0.3+\i*1.45},\l) rectangle ({1.2+\i*1.45},{\l+\n});
  \node[mutedgray,font=\scriptsize] at ({0.75+\i*1.45},-0.32) {$e=\lab$};}
% winner: e=3, low logit promoted by a large noise draw
\fill[tealbg,draw=paperteal] (3.2,0) rectangle (4.1,1.0);
\fill[paperteal,opacity=0.85] (3.2,1.0) rectangle (4.1,2.35);
\node[mutedgray,font=\scriptsize] at (3.65,-0.32) {$e=3$};
\draw[paperteal,thick,dashed,rounded corners=2pt] (3.06,-0.08) rectangle (4.24,2.5);
\node[paperteal,font=\scriptsize\bfseries,anchor=south] at (3.65,2.55) {argmax};
% legend
\fill[sand,draw=sanddark] (8.2,1.9) rectangle +(0.32,0.32);
\node[anchor=west,font=\scriptsize] at (8.6,2.06) {logit $\ell_e$};
\fill[sanddark] (8.2,1.25) rectangle +(0.32,0.32);
\node[anchor=west,font=\scriptsize] at (8.6,1.41) {Gumbel noise $\gamma_e$};
\node[anchor=west,font=\scriptsize,mutedgray,align=left,text width=4.4cm] at (8.2,0.45)
  {noise can promote a lower logit---that randomness \emph{is} the exploration};
\end{tikzpicture}
\caption{The Gumbel-max trick.  Each logit is perturbed by an independent
$\mathrm{Gumbel}(0,1)$ draw; the argmax of the perturbed logits is an exact
softmax sample, and the top $K$ of them is an exact Plackett--Luce sample
\eqref{eq:plackett-luce}.  Here a large noise draw promotes expert~3 over
the larger logit of expert~2.}
\label{fig:gumbel}
\end{figure}
\begin{equation}
  w_{te} =
  \frac{\exp(\ell_{te})\,\ind[e\in S_t]}
       {\sum_{j\in S_t}\exp(\ell_{tj})},
  \qquad
  y_t=\sum_{e\in S_t}w_{te}f_e(h_t).
  \label{eq:mixture}
\end{equation}
Other weight rules that are deterministic functions of the complete
Gumbel-noise vector can be used as well.

\paragraph{What ``routing law'' means here.}
Conditioned on a token's frozen logits, its routing law is the joint
distribution of its ordered Top-$K$ list and its weights in
\eqref{eq:mixture}.  It describes what can happen to one token under
stochastic routing.  It does not specify how the random choices of two
different tokens co-vary, nor does it fix realized batch loads.

\paragraph{Scope and level of the guarantee.}
The reference router in this paper is \emph{independent stochastic}
Gumbel-Top-$K$, not deterministic top-$K$.  A deterministic router has a
degenerate routing law, so no nontrivial dependence can be introduced while
preserving that law exactly.  Capacity clipping, token dropping, and
expert-choice allocation occur after the choices in \eqref{eq:gumbel-topk};
they are outside the invariance results below.  All invariance and expected-load
claims are \emph{layer-local}: they condition on the hidden states and logits
entering one routing layer.  If coupling changes the joint hidden-state
distribution at an earlier layer, later-layer logits may change as well.
Therefore the results do not by themselves establish end-to-end invariance of
a multi-layer MoE.

\paragraph{Relation to PEFT.}
Conventional MoE parameter-efficient fine-tuning attaches weight adapters to
experts or learns an additional adapter router
\citep{hu2022lora,liu2026perft}; both allocate trainable
\emph{representational} capacity and change what individual tokens prefer.
The framework studied here exposes a different, orthogonal budget: with all
base parameters $\Theta_{\mathrm{base}}$ frozen---embeddings, attention,
router, and experts---the only trainable object is a controller $\phi$ over
the \emph{dependence} of routing randomness, ranging from one scalar per MoE
layer to a small MLP.  Routing-only adaptation
(Section~\ref{sec:adaptation}) is thus an application of the dependence
framework, not its definition.  It is not a conventional weight-adapter PEFT
method: by construction, it cannot change any token's marginal preference for
an expert or its expected inclusion probability.

%==============================================================================
\section{Hierarchical Copula-Gumbel-Top-\texorpdfstring{$K$}{K}}\label{sec:method}
%==============================================================================
This section introduces cross-token coordination into a stochastic router
while leaving each token's marginal routing law unchanged. Throughout, the
marginal routing law denotes the distribution of a single token's ordered
expert list under repeated routing, and the joint routing law denotes the
dependence structure among the selections of distinct tokens routed together;
the frozen router fixes the former, and \CGA{} acts only on the latter.

The construction rests on the classical separation of a joint law into
marginals and dependence (Figure~\ref{fig:sklar}): by Sklar's theorem
\citep{sklar1959fonctions}, every joint distribution factors as
$F(x_1,\ldots,x_n)=C(F_1(x_1),\ldots,F_n(x_n))$ with $C$ a copula on
$[0,1]^n$, uniquely when the marginals are continuous.  Independence is
exactly the copula density $c\equiv1$; changing $C$ never touches the
$F_i$.  \CGA{} instantiates this with a Gaussian copula on the routing
noise.

\begin{figure}[t]
\centering
\begin{tikzpicture}[font=\small,
  arr/.style={-{Stealth[length=1.8mm]},mutedgray,semithick}]
\node[draw=ink,thick,rounded corners=3pt,minimum width=3.5cm,minimum height=1.5cm,align=center] (joint) at (0,0)
  {Joint distribution\\[2pt] $F(x_1,\dots,x_n)$};
\node[font=\Large,ink] at (2.25,0) {$=$};
\node[draw=paperteal,thick,fill=tealbg,rounded corners=3pt,minimum width=3.3cm,minimum height=1.5cm,align=center] (marg) at (4.55,0)
  {Marginals\\[2pt] $F_1,\ \dots,\ F_n$};
\node[font=\Large,ink] at (6.85,0) {$\circ$};
\node[draw=paperviolet,thick,fill=violetbg,rounded corners=3pt,minimum width=3.6cm,minimum height=1.5cm,align=center] (cop) at (9.3,0)
  {Copula $C$ on $[0,1]^n$\\[2pt] the dependence};
\node[paperteal,font=\scriptsize\bfseries,anchor=north] at (marg.south) {what each variable does alone};
\node[paperviolet,font=\scriptsize\bfseries,anchor=north] at (cop.south) {how variables move together};
\node[align=center,font=\small,anchor=north] at (4.65,-1.35)
  {$F(x_1,\dots,x_n) = C\big(F_1(x_1),\,\dots,\,F_n(x_n)\big)$
   \ \textcolor{mutedgray}{--- unique when marginals are continuous}};
\end{tikzpicture}
\caption{Sklar's theorem (1959): every joint law factors into marginals plus
a dependence object on the unit cube.  Dependence is a separate,
exchangeable module: change $C$, keep every marginal fixed.  In density
form, $f(x_1,\ldots,x_n)=c(F_1(x_1),\ldots,F_n(x_n))\prod_i f_i(x_i)$, so
the copula density $c$ is a multiplicative correction to independence that
never touches the $f_i$.}
\label{fig:sklar}
\end{figure}

The construction proceeds in four stages. Section~\ref{sec:controller} forms
groups of related tokens and fixes a coupling strength before any routing
noise is drawn; Section~\ref{sec:copula} induces correlated Gumbel noise
within a group; Section~\ref{sec:preserved} establishes that every token
retains its original stochastic Top-$K$ law; and Section~\ref{sec:hierarchy}
introduces a tunably antithetic shared signal between paired groups to
counteract the load burstiness produced by positive within-group
coordination.

\subsection{A pre-routing coupling controller}
\label{sec:controller}
Partition a sequence into disjoint candidate groups $g$.  In the minimal
version, these are fixed windows of $m$ adjacent tokens.  A controller reads
only frozen, pre-routing features $s_g$, such as the mean hidden state, mean
gate entropy, within-group gate similarity, and boundary indicators:
\begin{equation}
  a_g=\sigma(\phi(s_g))\in[0,1],\qquad
  \rho_g=a_g\rho_{\max},\qquad 0\leq\rho_{\max}<1.
  \label{eq:controller}
\end{equation}
Here $\sigma$ is the logistic sigmoid, so that $a_g\in[0,1]$. The correlation
used by the sampler is $\rho_g$, capped below one by $\rho_{\max}$. Keeping
$\rho_{\max}<1$ leaves each token with private randomness and keeps the
Gaussian construction below nondegenerate.

\paragraph{A second dial between paired groups.}
Once groups are matched into disjoint pairs, the controller may also choose an
opposition strength
\begin{equation}
  \alpha_{g,g'}=\sigma\!\left(\phi_{\mathrm{pair}}
  (\stopgrad(s_g),\stopgrad(s_{g'}))\right)\in[0,1].
  \label{eq:pair-controller}
\end{equation}
The within-group strength $\rho_g$ and between-group strength
$\alpha_{g,g'}$ have different jobs.  $\rho_g$ makes tokens in the same group
share more randomness; $\alpha_{g,g'}$ determines how strongly the two groups'
shared random signals oppose each other.  A fixed value of either dial is also
valid; learning both is optional.

For a single expert coordinate, \CGA{} may assign the tokens of a group a
partly shared random perturbation toward that expert when their frozen router
scores are similar. This perturbation does not increase the expert's score,
alter the router weights, or force any token to select the expert; it affects
only the random tie-breaking component of otherwise unchanged routing
decisions. At $\rho_g=0$ these perturbations are independent, and as $\rho_g$
approaches one the shared component dominates, while each token retains the
same marginal noise distribution.

The controller must be evaluated before routing noise is drawn.  At $a_g=0$,
$\rho_g=0$ and the rule reduces exactly to independent stochastic Top-$K$.
Groups may be chosen adaptively from frozen inputs, and groups may further be
matched into disjoint pairs with tunable opposition (Section~\ref{sec:hierarchy}),
provided that group membership \emph{and} the pairing are fixed before any
routing noise is sampled and groups remain disjoint.

\paragraph{Why ``pre-routing'' is required.}
The group, its coupling strength, and its optional paired partner may
depend on information already available to the frozen model, such as hidden
states or gate similarity.  They must not depend on the Gumbel draws or on
the selected experts from the current routing operation.  Selecting a group
after seeing a favorable random outcome would bias the distribution and
destroy the preservation result.

\subsection{Copula-correlated Gumbel perturbations}\label{sec:copula}
For every expert $e$ and group $g$, independently draw a shared latent
$\zeta_{ge}\sim\Normal(0,1)$ and per-token noises
$\epsilon_{te}\sim\Normal(0,1)$:
\begin{equation}
  y_{te}=\sqrt{\rho_g}\,\zeta_{ge}
         +\sqrt{1-\rho_g}\,\epsilon_{te},\qquad
  u_{te}=\Phi(y_{te}),\qquad
  \gamma_{te}=-\log[-\log(u_{te})].
  \label{eq:copula-gumbel}
\end{equation}

\begin{figure}[t]
\centering
\begin{tikzpicture}[font=\small,
  stage/.style={rounded corners=3pt,minimum width=3.55cm,minimum height=1.35cm,align=center},
  arr/.style={-{Stealth[length=2mm]},ink,thick}]
\node[stage,draw=paperviolet,thick,fill=violetbg] (s1) at (0,0)
  {shared $\zeta_{ge}\sim\Normal(0,1)$\\[1pt] $+$ private $\epsilon_{te}\sim\Normal(0,1)$};
\node[stage,draw=ink] (s2) at (4.05,0)
  {$y_{te}=\sqrt{\rho_g}\,\zeta_{ge}$\\[1pt] $\phantom{y_{te}=}+\sqrt{1-\rho_g}\,\epsilon_{te}$};
\node[stage,draw=ink] (s3) at (8.1,0) {$u_{te}=\Phi(y_{te})$};
\node[stage,draw=paperteal,thick,fill=tealbg] (s4) at (12.15,0)
  {$\gamma_{te}=-\log(-\log u_{te})$};
\draw[arr] (s1) -- (s2); \draw[arr] (s2) -- (s3); \draw[arr] (s3) -- (s4);
\node[mutedgray,font=\scriptsize,anchor=north,align=center] at ($(s1.south)+(0,-0.08)$) {\emph{dependence injected here}};
\node[mutedgray,font=\scriptsize,anchor=north,align=center] at ($(s2.south)+(0,-0.08)$) {\emph{still $\Normal(0,1)$ marginally}};
\node[mutedgray,font=\scriptsize,anchor=north,align=center] at ($(s3.south)+(0,-0.08)$) {\emph{$\Uniform[0,1]$ marginally}};
\node[mutedgray,font=\scriptsize,anchor=north,align=center] at ($(s4.south)+(0,-0.08)$) {\emph{$\mathrm{Gumbel}(0,1)$ marginally}};
\node[ink,font=\small\bfseries,align=center] at (6.05,-1.5)
  {correlated across tokens $t$ in a group\quad$\cdot$\quad independent across
   experts $e$\quad$\cdot$\quad exactly Gumbel per coordinate};
\end{tikzpicture}
\caption{The copula-Gumbel pipeline of \eqref{eq:copula-gumbel}.  Dependence
enters once, through the shared latent $\zeta_{ge}$; every later map is a
strictly monotone marginal transform, so the dependence pattern survives
while each coordinate lands exactly on the $\mathrm{Gumbel}(0,1)$ marginal
that Gumbel-Top-$K$ routing requires.}
\label{fig:pipeline}
\end{figure}

\paragraph{Distributional properties of the sampler.}
Figure~\ref{fig:pipeline} depicts the construction.
In \eqref{eq:copula-gumbel}, $y_{te}$ is a convex combination of the shared
latent $\zeta_{ge}$, common to every token in group $g$ at expert $e$, and the
private noise $\epsilon_{te}$, specific to token $t$. The coefficients
$\sqrt{\rho_g}$ and $\sqrt{1-\rho_g}$ are chosen so that $y_{te}$ is marginally
standard normal for every token, with $\rho_g$ setting the fraction of variance
attributable to the shared source: $\rho_g=0$ under independent routing, and
$\rho_g$ increases with coordination. The subsequent transformations alter the
marginal shape but not the dependence pattern: $u_{te}=\Phi(y_{te})$ is uniform
on $[0,1]$, and the inverse Gumbel c.d.f.\ maps $u_{te}$ to a standard Gumbel
variate $\gamma_{te}$, as required by Gumbel-Top-$K$ routing. This
Gaussian-to-uniform-to-Gumbel pipeline is the copula: it preserves each token's
marginal Gumbel law while inducing dependence across tokens.

The draws are independent across expert coordinates $e$.  This detail is
essential.  A token needs independent Gumbel perturbations across experts for
its usual ranked Top-$K$ distribution to remain valid.  We then apply the
unchanged Top-$K$ selection and mixture rule in
\eqref{eq:gumbel-topk} and \eqref{eq:mixture}.

Positive $\rho_g$ correlates the random perturbations received by the same
expert across tokens.  When related tokens have similar frozen logits, this
raises their chance of including the same experts.  It does not guarantee
same-expert inclusion for dissimilar gates.  And it has a quantifiable cost:
bunching a group's tokens onto shared experts makes their inclusion counts
positively correlated, so the \emph{variance} of realized per-expert loads
grows even though conditional expected loads at this layer are exactly
preserved.
Proposition~\ref{prop:variance} makes both statements precise: the
within-group and between-group dials jointly control coherence and load
dispersion, and
Section~\ref{sec:hierarchy} supplies the second pole.

\subsection{What is preserved}\label{sec:preserved}
\paragraph{The key distinction.}
The construction deliberately changes how tokens' random choices move
together.  It does not change the distribution of the random vector seen by
one token.  For example, two adjacent tokens may now select the same expert
more often, but if either token is considered alone and routed repeatedly, it
has exactly the same probabilities for every ordered Top-$K$ list as before.
The following theorem formalizes this statement.

\begin{theorem}[Per-token Top-$K$ routing-law invariance]\label{thm:law}
Condition on all frozen hidden states, router logits, group memberships, and
controller outputs.  Suppose that \CGA{} uses \eqref{eq:copula-gumbel} and that
the copula draws are independent across expert coordinates.  For every token
$t$,
\begin{equation}
  (\gamma_{t1},\ldots,\gamma_{tE})\overset{d}{=}
  (G_1,\ldots,G_E),\qquad G_e\overset{\mathrm{iid}}{\sim}
  \mathrm{Gumbel}(0,1).
  \label{eq:gumbel-law}
\end{equation}
Consequently, the ordered list $(r_{t1},\ldots,r_{tK})$, selected set $S_t$,
and weights $(w_{t1},\ldots,w_{tE})$ have exactly the same conditional
distribution as under the independent Gumbel-Top-$K$ router.
\end{theorem}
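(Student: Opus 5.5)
The plan is to prove the noise-law identity \eqref{eq:gumbel-law} first, and then push it through the deterministic Top-$K$ and mixture maps. Throughout we work conditionally on the frozen hidden states, logits, group memberships, pairings and controller outputs. Because the controller is evaluated before any noise is drawn (Section~\ref{sec:controller}), all of these are fixed constants under this conditioning. Fix a token $t$ in group $g$. Its coupling strength $\rho_g\in[0,\rho_{\max}]$ is then a deterministic number, and the latents $\zeta_{ge}$ and $\epsilon_{te}$ are standard normals that are independent of the conditioning information.

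\textbf{Step 1: one coordinate.} For fixed $e$, $y_{te}=\sqrt{\rho_g}\,\zeta_{ge}+\sqrt{1-\rho_g}\,\epsilon_{te}$ is a sum of independent centered Gaussians with variance $\rho_g+(1-\rho_g)=1$, so $y_{te}\sim\Normal(0,1)$. The probability integral transform then gives $u_{te}=\Phi(y_{te})\sim\Uniform(0,1)$, since $\Phi$ is continuous and strictly increasing. The map $u\mapsto-\log(-\log u)$ is the inverse of the Gumbel c.d.f.\ $x\mapsto\exp(-e^{-x})$, so $\gamma_{te}\sim\mathrm{Gumbel}(0,1)$. Note that $u_{te}\in(0,1)$ almost surely, so the map is well defined.

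\textbf{Step 2: joint independence across $e$.} This is the step that actually needs care. The vector $(\gamma_{t1},\ldots,\gamma_{tE})$ is a coordinatewise measurable function of the pairs $(\zeta_{ge},\epsilon_{te})_{e=1}^E$. By construction these pairs are mutually independent across $e$: every $\zeta_{ge}$ and $\epsilon_{te}$ is drawn independently for each expert. Functions of independent blocks are independent, so the $\gamma_{te}$ are independent across $e$. Combined with Step 1, this yields \eqref{eq:gumbel-law}.

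The point to check is that the token's own coordinates draw on no common randomness. Other tokens share $\zeta_{ge}$ with $t$, but only within the same $e$. Other groups are irrelevant to token $t$'s vector. The antithetic pairing of Section~\ref{sec:hierarchy} is not yet in play for this theorem; its case is handled by the same per-coordinate argument once the paired latent is shown to be marginally $\Normal(0,1)$ and independent across $e$.

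\textbf{Step 3: transfer to routing outputs.} Under the conditioning, the ordered list $(r_{t1},\ldots,r_{tK})$ is a fixed measurable function of $(\ell_{te}+\gamma_{te})_e$, namely $\TopK$ in \eqref{eq:gumbel-topk}. Ties occur with probability zero because the Gumbel law is continuous, so the tie-breaking convention does not matter. The set $S_t$ and the weights in \eqref{eq:mixture} are in turn deterministic functions of that list and the fixed logits. Equality in distribution is preserved under measurable maps, so each of these outputs, and indeed their joint law, coincides with that under independent Gumbel-Top-$K$; in particular the Plackett--Luce formula \eqref{eq:plackett-luce} holds. The main obstacle is therefore only the bookkeeping in Step 2, showing that the conditioning leaves $\rho_g$ and group membership non-random and independent of the noise. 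That is exactly what the pre-routing requirement secures.
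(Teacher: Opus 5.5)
Your proposal is correct and follows the paper's proof essentially step for step. You show that each $y_{te}$ is standard normal and push it through $\Phi$ and the inverse Gumbel c.d.f., get independence across expert coordinates from the independent $(\zeta_{ge},\epsilon_{te})$ draws, and transfer the noise law to the ordered list, selected set, and weights as deterministic functions of the noise vector and fixed logits. Your extra remarks are careful bookkeeping that the paper leaves implicit: ties occurring with probability zero, $u_{te}\in(0,1)$ almost surely, and the pre-routing requirement making $\rho_g$ nonrandom and independent of the noise.
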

\begin{proof}
For every $t,e$, $y_{te}$ in \eqref{eq:copula-gumbel} is standard normal, so
$u_{te}$ is uniform and $\gamma_{te}$ is standard Gumbel.  Different coordinates
$e$ use independent $(\zeta_{ge},\epsilon_{te})$ draws, so
$(\gamma_{t1},\ldots,\gamma_{tE})$ is i.i.d.\ Gumbel.  The Top-$K$ map and the
gate-weight map in \eqref{eq:gumbel-topk}--\eqref{eq:mixture} are deterministic
functions of this vector and the fixed logits, which proves the result.
\end{proof}

\paragraph{What the theorem does and does not say.}
The theorem is conditional on the hidden states and logits entering this
routing layer.  In plain terms, once the frozen router has supplied the
scores for the current tokens, replacing independent noise by \CGA{} noise
does not change the probability distribution of any one token's route.  It
does change the joint distribution of several routes.  In a model with
multiple coupled MoE layers, that change can in turn affect later hidden
states and later logits.  Thus this is a precise layer-local safety guarantee,
not a claim that the entire multi-layer network has an unchanged output
distribution.

\begin{corollary}[Expected inclusion load is invariant]\label{cor:load}
Let $N_e=\sum_t\ind[e\in S_t]$ be the number of tokens that include expert
$e$.  Under the conditions of Theorem~\ref{thm:law},
\begin{equation}
  \E[N_e\mid\{\ell_t\}_t,\{a_g,\rho_g\}_g]
  = \sum_t\Prob\!\left(e\in S_t\mid\ell_t;\,
    \text{independent Gumbel-Top-}K\right).
  \label{eq:load}
\end{equation}
The right-hand side is the conditional expected inclusion count of the frozen
independent Gumbel-Top-$K$ router.  In particular, it does not depend on the
coupling strengths, groups, or fixed pairing once the layer's incoming logits
are fixed.
\end{corollary}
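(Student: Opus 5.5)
The plan is to reduce Corollary~\ref{cor:load} to Theorem~\ref{thm:law} using only linearity of conditional expectation. All expectations and probabilities below are conditional on the $\sigma$-field $\mathcal{F}$ generated by the frozen hidden states, the logits $\{\ell_t\}_t$, the group memberships, any fixed pairing, and the controller outputs $\{a_g,\rho_g\}_g$ (and $\alpha_{g,g'}$ where present). This conditioning is legitimate because Section~\ref{sec:controller} requires all of these to be fixed before any routing noise is drawn. Conditioning on $\mathcal{F}$ refines the conditioning in \eqref{eq:load}; by the tower property, it therefore suffices to show that the $\mathcal{F}$-conditional expectation already equals the right-hand side of \eqref{eq:load}, which depends only on the logits.

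First, write
\[
\E[N_e\mid\mathcal{F}]=\sum_t \E[\ind[e\in S_t]\mid\mathcal{F}]=\sum_t\Prob(e\in S_t\mid\mathcal{F}).
\]
This step needs no independence across tokens: linearity of expectation holds under any joint law. This is why cross-token dependence, however strong, cannot move the mean.

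Second, for each fixed $t$, the event $\{e\in S_t\}$ is a deterministic function of the vector $(\gamma_{t1},\ldots,\gamma_{tE})$ and the fixed logits $\ell_t$, via \eqref{eq:gumbel-topk}. Theorem~\ref{thm:law} says that, conditional on $\mathcal{F}$, this vector is i.i.d.\ $\mathrm{Gumbel}(0,1)$. Hence $\Prob(e\in S_t\mid\mathcal{F})$ equals the probability that $e$ lies among the top $K$ of $\ell_{tj}+G_j$. That is exactly the inclusion probability under independent Gumbel-Top-$K$, and it is a function of $\ell_t$ alone. If desired, it can be written as the sum over ordered lists containing $e$ of the Plackett--Luce probabilities \eqref{eq:plackett-luce}.

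Third, since this quantity depends only on $\ell_t$, it does not depend on $\rho_g$, the groups, or the pairing. Summing over $t$ gives \eqref{eq:load} together with the stated independence from the coupling parameters.

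The only delicate point is making sure that Theorem~\ref{thm:law} applies to the full hierarchical sampler, including the cross-group antithetic latents, and not just the flat copula. The check is that each token's $y_{te}$ remains marginally $\Normal(0,1)$ and independent across $e$ once the pairing is fixed in advance. I would confirm this by referring to the hierarchical construction (Corollary~\ref{cor:hier}) or, for the flat case, by citing the theorem directly. Beyond that, the argument is routine.
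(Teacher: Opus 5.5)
Your proof is correct and matches the paper's argument: apply Theorem~\ref{thm:law} to each indicator $\ind[e\in S_t]$ and sum by linearity of expectation, which needs no independence across tokens. Your extra steps are sound refinements of that same route: the tower-property reduction, and covering the paired case through the per-token standard-normality shown in the proof of Corollary~\ref{cor:hier}, rather than through its conclusion, which would be circular.
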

\begin{proof}
Apply Theorem~\ref{thm:law} to the indicator $\ind[e\in S_t]$ and sum over
tokens.  No independence across tokens is required for linearity of
expectation.
\end{proof}

\paragraph{Interpretation.}
The corollary follows from a simple accounting rule: expected load is the sum
of each token's expert-inclusion probability, and those individual
probabilities are unchanged.  It is stronger than preserving a softmax
coordinate.  Under Gumbel-Top-$K$, softmax values are not themselves
expert-inclusion probabilities.  \CGA{} preserves the complete base
distribution over the ordered expert list and thus every inclusion
probability implied by that distribution.

Expected load is not the same as realized load: realized per-batch counts
fluctuate around this conditional mean.
\CGA{} preserves the conditional average over many routing draws, but it does
\emph{not} preserve realized loads, load variance, capacity overflow, or a
deterministic top-$K$ model's output exactly.

\subsection{Hierarchical dependence control}\label{sec:hierarchy}
Positive coupling inside one group encourages local agreement, but that same
agreement can make the group's demand for an expert arrive in a burst.  The
second level of the hierarchy acts \emph{between} groups.  It provides an
opposing signal whose strength can be selected: if one group receives a shared
random push toward an expert, its matched partner can receive an independent,
partly opposite, or fully opposite shared push.

Match groups into disjoint pairs $(g,g')$, fixed before sampling.  In the
minimal version, consecutive non-overlapping windows can be paired.  More
generally, any fixed pre-routing matching rule is valid.
Figure~\ref{fig:two-level-hierarchy} illustrates the maximally opposed case,
$\alpha_{g,g'}=1$.  For each pair and each expert coordinate $e$, draw
independent $\zeta_{ge},\eta_{g'e}\sim\Normal(0,1)$ and set

\begin{figure}[t]
\centering
\begin{tikzpicture}[
  token/.style={draw=ink,fill=white,rounded corners,minimum width=1.25cm,minimum height=0.45cm,
                align=center,font=\small},
  link/.style={thick},
  relation/.style={paperred,very thick,dashed,{Stealth[length=2mm]}-{Stealth[length=2mm]}}
]
  % Lower group.
  \node[font=\small\bfseries,paperteal] (labela) at (-1.7,1.65)
    {Group $g$: lower hierarchy};
  \node[token] (a1) at (-3.2,1.05) {token $t_1$};
  \node[token] (a2) at (-3.2,0.35) {token $t_2$};
  \node[token] (a3) at (-3.2,-0.35) {token $t_3$};
  \node[draw=paperteal,thick,fill=tealbg,rounded corners,minimum width=2.15cm,
        minimum height=0.75cm,align=center,font=\small] (za) at (0,0.35)
    {shared latent\\$\zeta_{ge}$};
  \draw[link,paperteal] (a1.east) -- (za.west);
  \draw[link,paperteal] (a2.east) -- (za.west);
  \draw[link,paperteal] (a3.east) -- (za.west);
  \node[align=center,font=\scriptsize,mutedgray] at (0,1.25)
    {same shared signal\\positive within-group dependence ($\rho_g$)};

  % Lower partner group.
  \node[draw=paperviolet,thick,fill=violetbg,rounded corners,minimum width=2.15cm,
        minimum height=0.75cm,align=center,font=\small] (zb) at (6.5,0.35)
    {shared latent\\$\zeta_{g'e}$};
  \node[token] (b1) at (9.7,1.05) {token $t'_1$};
  \node[token] (b2) at (9.7,0.35) {token $t'_2$};
  \node[token] (b3) at (9.7,-0.35) {token $t'_3$};
  \draw[link,paperviolet] (zb.east) -- (b1.west);
  \draw[link,paperviolet] (zb.east) -- (b2.west);
  \draw[link,paperviolet] (zb.east) -- (b3.west);
  \node[font=\small\bfseries,paperviolet] at (8.3,1.65)
    {Group $g'$: lower hierarchy};
  \node[align=center,font=\scriptsize,mutedgray] at (6.5,1.25)
    {same shared signal\\positive within-group dependence ($\rho_{g'}$)};
  \draw[relation] (za.east) -- node[below=5pt,align=center,font=\scriptsize,text=ink]
    {\textbf{higher hierarchy}\\[1pt]
     $\zeta_{g'e}=-\alpha_{g,g'}\,\zeta_{ge}+\sqrt{1-\alpha_{g,g'}^2}\;\eta_{g'e}$\\[1pt]
     shown at $\alpha_{g,g'}=1$: opposite group signals}
    (zb.west);
\end{tikzpicture}
\caption{The two-level dependence structure for one expert coordinate $e$.
At the lower level, all tokens in each group share that group's latent,
creating positive dependence \emph{within} Group $g$ and within Group $g'$.
At the higher level, $\alpha_{g,g'}$ controls the opposition between the two
group latents.  The diagram shows $\alpha_{g,g'}=1$, where paired groups
receive fully opposite shared random pushes.  At $\alpha_{g,g'}=0$, they are
independent.  Tokens inside either individual group remain positively
coordinated.  Private token noises $\epsilon_{te}$ are omitted for clarity.}
\label{fig:two-level-hierarchy}
\end{figure}

\noindent Figure~\ref{fig:two-level-hierarchy} shows the key point: opposition
belongs to the \emph{relationship between groups}, not to the relationship
among tokens inside Group $g'$.  Both groups have positive within-group
dependence; only the higher hierarchy controls how strongly their shared
signals oppose one another.

\begin{equation}
  \zeta_{g'e}
  =-\alpha_{g,g'}\zeta_{ge}
   +\sqrt{1-\alpha_{g,g'}^2}\,\eta_{g'e},
  \qquad 0\leq\alpha_{g,g'}\leq1,
  \label{eq:paired-latent}
\end{equation}
leaving the within-group construction \eqref{eq:copula-gumbel} and all
per-token noises $\epsilon_{te}$ unchanged and independent.  Whatever random
push the pair's first member gives its tokens toward expert $e$, the second
member receives an opposing shared component of strength $\alpha_{g,g'}$.
At $\alpha_{g,g'}=0$, the two group latents are independent; at
$\alpha_{g,g'}=1$, \eqref{eq:paired-latent} becomes the fully antithetic rule
$\zeta_{g'e}=-\zeta_{ge}$.  Pairing supports strong local negative dependence
without requiring a jointly negative equicorrelated vector across all groups,
whose feasible correlation is bounded below by $-1/(G-1)$ and vanishes as the
number of groups grows.

\paragraph{Why this does not break the one-token law.}
For any $\alpha_{g,g'}\in[0,1]$, the two terms on the right of
\eqref{eq:paired-latent} combine independent standard normals with squared
coefficients that sum to one.  The partner group's shared signal is therefore
still standard normal when viewed alone.  Each token continues to combine
that signal with private noise in exactly the way used in
\eqref{eq:copula-gumbel}.  The following corollary records the consequence.

\begin{corollary}[Hierarchical routing-law invariance]\label{cor:hier}
Under the paired-latent rule \eqref{eq:paired-latent}, for any fixed
$\alpha_{g,g'}\in[0,1]$, the conclusion of
Theorem~\ref{thm:law} and Corollary~\ref{cor:load} continue to hold for every
token.
\end{corollary}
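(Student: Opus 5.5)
The plan is to reduce Corollary~\ref{cor:hier} to the proof of Theorem~\ref{thm:law}. That proof used only two facts about a fixed token $t$: each $y_{te}$ is marginally $\Normal(0,1)$, and the collection $(y_{t1},\ldots,y_{tE})$ is mutually independent across experts $e$. Once both facts are re-established under the paired-latent rule \eqref{eq:paired-latent}, the rest of the argument carries over verbatim. The monotone maps $\Phi$ and $-\log(-\log\cdot)$ send each $y_{te}$ to a standard Gumbel variable, which gives \eqref{eq:gumbel-law}. The ordered list, the set $S_t$ and the weights \eqref{eq:mixture} are deterministic functions of this noise vector and the fixed logits. Corollary~\ref{cor:load} then follows by linearity exactly as before. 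Throughout we condition on logits, groups, the pairing and all controller outputs, including $\alpha_{g,g'}$. This is legitimate because these are fixed before any noise is drawn (Section~\ref{sec:controller}), so the $\alpha$'s act as constants.

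\textbf{Marginal step.} Take a token $t$ in group $h$. If $h$ is unpaired, or is the first member $g$ of its pair, then $\zeta_{he}$ is drawn directly as $\Normal(0,1)$ and nothing changes. If $h=g'$ is the second member, then $\zeta_{g'e}=-\alpha\zeta_{ge}+\sqrt{1-\alpha^2}\,\eta_{g'e}$, where $\zeta_{ge}$ and $\eta_{g'e}$ are independent standard normals. This is Gaussian with mean $0$ and variance $\alpha^2+(1-\alpha^2)=1$ for every $\alpha\in[0,1]$. The private noise $\epsilon_{te}$ is independent of $(\zeta_{ge},\eta_{g'e})$. Hence $y_{te}=\sqrt{\rho_{g'}}\,\zeta_{g'e}+\sqrt{1-\rho_{g'}}\,\epsilon_{te}$ is again a sum of independent centered Gaussians with unit total variance, so $y_{te}\sim\Normal(0,1)$.

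\textbf{Independence step.} This is the step that needs care, because pairing adds dependence and we must check that none of it leaks across experts within one token. The plan is to observe that all primitive draws $\{\zeta_{ge},\eta_{g'e},\epsilon_{te}\}$ are mutually independent, and that, for each $e$, the primitive draws indexed by $e$ form a block independent of the blocks for other $e'$. By \eqref{eq:copula-gumbel} and \eqref{eq:paired-latent}, $y_{te}$ is a measurable function of the $e$-indexed block alone: namely of $\zeta_{ge}$, $\eta_{g'e}$ and $\epsilon_{te}$, with coefficients that are constants under the conditioning. Functions of independent blocks are independent, so $(y_{t1},\ldots,y_{tE})$ has independent coordinates. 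The pairing introduces dependence only across tokens and groups at the same expert coordinate, never across coordinates.

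\textbf{Conclusion.} Combining the two steps, $(y_{t1},\ldots,y_{tE})$ is i.i.d.\ $\Normal(0,1)$ for every token and every fixed $\alpha_{g,g'}\in[0,1]$. This is exactly the input the proof of Theorem~\ref{thm:law} needs, so both Theorem~\ref{thm:law} and Corollary~\ref{cor:load} hold unchanged. The right-hand side of \eqref{eq:load} is in particular free of $\alpha$. The endpoint $\alpha=1$, where $\zeta_{g'e}=-\zeta_{ge}$ is degenerate with respect to the pair, needs no separate treatment, since the variance computation and the block argument do not use nondegeneracy.
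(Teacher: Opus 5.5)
Your proposal is correct and matches the paper's proof: the paper also checks that the paired latent in \eqref{eq:paired-latent} is standard normal, and that each token touches exactly one shared latent per expert coordinate. Hence every $y_{te}$ is standard normal with independent coordinates across $e$, and the proof of Theorem~\ref{thm:law} applies verbatim; your block-measurability version of the independence step simply makes the paper's one-line justification explicit.
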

\begin{proof}
Equation~\eqref{eq:paired-latent} is standard normal for every pair and
expert coordinate.  Pairs use independent draws across expert coordinates,
and each token belongs to exactly one group, hence touches exactly one shared
latent per expert coordinate.  Every $y_{te}$ is therefore still standard
normal with independent coordinates across $e$, and the proof of
Theorem~\ref{thm:law} applies verbatim.
\end{proof}

\begin{proposition}[Coherence--dispersion trade-off]\label{prop:variance}
Condition on all logits, group memberships, pairings, within-group coupling
strengths, and opposition strengths $\alpha_{g,g'}\in[0,1]$, and fix an
expert $e$.  Write $X_g=\sum_{t\in g}\ind[e\in S_t]$ and $N_e=\sum_g X_g$.
Then the conditional expected loads $\E[N_e]$ are identical under independent
routing, flat coupling (independent $\zeta_{ge}$ across groups), and the
tunable paired construction, and:
\begin{enumerate}
  \item[(i)] under flat coupling with any strengths $\rho_g\geq 0$,
  $\operatorname{Var}(N_e)$ is at least its value under independent routing;
  \item[(ii)] under paired coupling with the same $\rho_g$ and any
  $\alpha_{g,g'}\in[0,1]$, $\operatorname{Var}(N_e)$ is at most its value
  under flat coupling.
\end{enumerate}
\end{proposition}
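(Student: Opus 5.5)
The plan is to write $N_e=\sum_t I_t$ with $I_t=\ind[e\in S_t]$ and expand
\[
\operatorname{Var}(N_e)=\sum_t\operatorname{Var}(I_t)+\sum_{t\neq t'}\operatorname{Cov}(I_t,I_{t'}),
\]
then compare the three schemes term by term.

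\textbf{Shared terms.} Equality of $\E[N_e]$ across the three schemes is immediate from Corollary~\ref{cor:load} and Corollary~\ref{cor:hier}. The same two results, applied to the indicator, show that each $\operatorname{Var}(I_t)=p_t(1-p_t)$ is identical across schemes, since $p_t=\Prob(e\in S_t)$ is unchanged. So only the cross terms differ.

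Under independent routing every cross term is $0$. Under flat coupling, tokens in different groups depend on disjoint independent collections $(\zeta_{g\cdot},\epsilon_{t\cdot})$, so cross-group covariances are again $0$. Hence (i) reduces to showing $\operatorname{Cov}(I_t,I_{t'})\geq0$ for $t,t'$ in the same group $g$.

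\textbf{Within-group covariances.} Condition on the group latent vector $\zeta_g=(\zeta_{g1},\ldots,\zeta_{gE})$. Given $\zeta_g$, $I_t$ and $I_{t'}$ are conditionally independent, because they use independent private noises. The law of total covariance then gives $\operatorname{Cov}(I_t,I_{t'})=\operatorname{Cov}(h_t(\zeta_g),h_{t'}(\zeta_g))$, where $h_t(z)=\E[I_t\mid\zeta_g=z]$.

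The key structural fact, and the step I expect to need the most care, is a monotonicity property of $I_t$. As a function of the noise vector $(\gamma_{t1},\ldots,\gamma_{tE})$ with logits fixed, $I_t$ is nondecreasing in $\gamma_{te}$ and nonincreasing in each $\gamma_{tj}$, $j\neq e$. Raising $e$'s perturbed score can only improve its rank, and raising a competitor's score can only push $e$ out of the top $K$.

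Since $z\mapsto\gamma_{tj}$ is strictly increasing (composition of $y\mapsto\sqrt{\rho_g}z+\cdots$, $\Phi$, and the inverse Gumbel c.d.f.) whenever $\rho_g\geq0$, $h_t$ inherits these coordinatewise monotonicity directions. The directions are the same for every token, regardless of its logits.

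Flipping the sign of coordinates $j\neq e$ makes all $h_t$ coordinatewise nondecreasing. The coordinates of $\zeta_g$ are independent standard normals, so Harris's (FKG) inequality for product measures gives $\operatorname{Cov}(h_t,h_{t'})\geq0$, proving (i).

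\textbf{Paired construction.} For (ii), the within-group covariances are unchanged: each group's latent vector still has i.i.d.\ $\Normal(0,1)$ coordinates (Corollary~\ref{cor:hier}), and the within-group covariance depends only on that law. Unpaired groups remain independent. Therefore
\[
\operatorname{Var}_{\mathrm{paired}}(N_e)=\operatorname{Var}_{\mathrm{flat}}(N_e)+2\sum_{(g,g')}\operatorname{Cov}(X_g,X_{g'}),
\]
and it suffices to show each paired covariance is $\leq0$.

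Conditioning on $(\zeta_g,\zeta_{g'})$ gives $\operatorname{Cov}(X_g,X_{g'})=\operatorname{Cov}(H_g(\zeta_g),H_{g'}(\zeta_{g'}))$, where $H_g=\sum_{t\in g}h_t$ has the same monotonicity directions as above. Substitute $\zeta_{g'}=-\alpha\zeta_g+\sqrt{1-\alpha^2}\,\eta$ and integrate out the independent $\eta$. This defines
\[
K(z)=\E\bigl[H_{g'}(-\alpha z+\sqrt{1-\alpha^2}\,\eta)\bigr],
\]
so that $\operatorname{Cov}(X_g,X_{g'})=\operatorname{Cov}(H_g(\zeta_g),K(\zeta_g))$. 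Because $\alpha\geq0$, $K$ is monotone in exactly the opposite direction to $H_g$ in every coordinate (it is constant when $\alpha=0$).

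Harris's inequality for one nondecreasing and one nonincreasing function of independent coordinates then yields $\operatorname{Cov}\leq0$, which completes (ii). The remaining technical points are routine: boundedness of the functions and the conditional-independence bookkeeping.
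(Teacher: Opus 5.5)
Your proof is correct and follows essentially the same route as the paper's appendix. Both arguments condition on the group latent, use the coordinatewise monotonicity of the inclusion indicator after flipping the signs of the competitor coordinates, and apply the association (Harris) inequality, within groups for (i) and to $H_g$ against the $\eta$-averaged partner mean for (ii). One trivial slip: at $\rho_g=0$ the map $z\mapsto\gamma_{tj}$ is constant rather than strictly increasing, but only nondecreasing monotonicity is needed.
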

Part (i) states that positive within-group coupling cannot make an expert's
conditional load less variable than under independent routing, which is the
cost of local coherence. Part (ii) states that adding cross-group opposition
to such coupling cannot make that load more variable than flat coupling at the
same strengths. The two constructions coincide at $\alpha=0$ and are fully
antithetic at $\alpha=1$; the opposition is a partial counterbalance, not a
guarantee of lower variance than independent routing.

\begin{proof}[Proof sketch]
Both parts follow from the association inequality for functions of
independent random variables \citep{esary1967association} after one sign
change of coordinates; the full argument is in the appendix.  For (i), given
the group's latent vector, token inclusion indicators within a group are
conditionally independent with conditional means that are coordinatewise
monotone in a common transformed latent, hence pairwise nonnegatively
correlated, so within-group count variance can only grow.  For (ii), after
averaging over a partner group's independent residual noise, its conditional
mean is coordinatewise nonincreasing in $W_g$.  The paired counts therefore
have nonpositive covariance; unpaired groups are independent, so total
variance can only shrink.
\end{proof}

Two qualifications delimit the claim.  First, the two bounds run in opposite
directions from different baselines: the hierarchical scheme reduces variance
relative to \emph{flat coupling at the same $\rho$}, not necessarily below
the independent-routing baseline; the dials interpolate, they do not
dominate.  Second, the magnitude of both effects depends on how strongly
inclusion probabilities respond to the shared latents, which varies with the
gate distribution; the proposition signs the effects but does not quantify
them, and heterogeneous gates can make either effect small.

The higher level also changes \emph{cross-group} joint statistics.  Increasing
$\alpha_{g,g'}$ strengthens the opposing shared component; the pilot's
synthetic check measures the resulting paired-boundary overlap directly.
Flat coupling ($\alpha_{g,g'}=0$) leaves their shared latents independent.
Applications that rely on cross-group co-occurrence patterns should treat
$\alpha_{g,g'}$ and the pairing as active design choices, not free lunches.

\subsection{Routing-only adaptation}\label{sec:adaptation}
This subsection explains one possible use of the dependence mechanism.  It
does not claim that routing-only adaptation is effective on every task.  The
fixed-coupling mechanism above needs no learning at all: a user can choose a
window size, a pairing rule, and values of $\rho_g$ and $\alpha_{g,g'}$.
Adaptation asks the
separate question of whether a small controller can choose the coupling
strength from frozen features while all ordinary model weights remain fixed.

Let $Y$ collect the correlated standard-normal variables $y_{te}$ in
\eqref{eq:copula-gumbel} across the groups in an input.  Conditional on the
frozen features and a fixed pairing, $Y$ has a Gaussian density
$p_\phi(Y\mid x)$ whose within-group correlations are set by
$\rho_g=\rho_{\max}\sigma(\phi(\stopgrad(s_g)))$ and whose paired
cross-group correlations are set by \eqref{eq:pair-controller}.  The
$\stopgrad(\cdot)$ operation guarantees that the controller's inputs provide
no gradient path into the backbone or router, so the routing-only property
would survive even if the base parameters were trainable.
With $\Theta_{\mathrm{base}}$ frozen, we optimize only $\phi$:
\begin{equation}
  \min_{\phi}\;
  \E_{(x,\tilde y),\,Y\sim p_{\phi}(Y\mid x)}
  \left[\mathcal{L}\big(F_{\Theta_{\mathrm{base}}}(x;Y),\tilde y\big)\right].
  \label{eq:peft-objective}
\end{equation}
Here $\tilde y$ denotes the target, avoiding a collision with the
pre-Gumbel normal variable $Y$.  The discrete Top-$K$ map prevents ordinary
pathwise differentiation through the selected set: a tiny change in a noise
value usually changes no selected expert, then abruptly changes the selected
set at a ranking boundary.  Instead, for
$L=\mathcal{L}(F_{\Theta_{\mathrm{base}}}(x;Y),\tilde y)$, a score-function
estimator \citep{williams1992simple} uses
\begin{equation}
  \nabla_\phi \E[L\mid x]
  =
  \E\!\left[
    \bigl(L-b(x)\bigr)
    \nabla_\phi\log p_\phi(Y\mid x)
    \,\middle|\,x
  \right],
  \label{eq:score-function}
\end{equation}
where $b(x)$ is any stop-gradient baseline that does not depend on the
realized routing noise, for example a moving-average or leave-one-out
baseline.  In the score term, $Y$ is treated as the sampled observation; the
gradient differentiates the closed-form Gaussian log-density with respect to
the controller parameters.  Thus the base model is evaluated only in the
forward direction, while the backward pass updates only $\phi$.  No gradient
through the discrete Top-$K$ operation is assumed or required.
Figure~\ref{fig:controller} summarizes the training loop.

\begin{figure}[t]
\centering
\begin{tikzpicture}[font=\small,
  inner/.style={draw=mutedgray,fill=white,rounded corners=3pt,minimum width=2.6cm,minimum height=1.2cm,align=center},
  arr/.style={-{Stealth[length=1.8mm]},ink,semithick}]
% frozen base
\node[inner] (hid) at (0,0) {hidden\\states $h_t$};
\node[inner] (log) at (3.2,0) {router logits\\$\ell_t$ (frozen)};
\node[inner] (loss) at (6.6,0) {Gumbel-Top-$K$\\$+$ experts $\to$ loss $L$};
\draw[arr] (hid) -- (log);
\draw[arr] (log) -- (loss);
\node[draw=mutedgray,dashed,rounded corners=5pt,fit=(hid)(loss),inner sep=10pt] (base) {};
\node[mutedgray,font=\scriptsize\bfseries,anchor=south] at (base.north)
  {frozen base model --- forward only, no backprop};
% controller
\node[draw=paperteal,very thick,fill=tealbg,rounded corners=4pt,align=center,minimum width=4.6cm,minimum height=1.5cm] (ctrl) at (11.6,0)
  {\textbf{trainable controller $\phi$}\\[2pt]
   $\rho_g=\rho_{\max}\,\sigma(\phi(s_g))$\\[1pt]
   $\alpha_{g,g'}=\sigma\big(\phi_{\mathrm{pair}}(s_g,s_{g'})\big)$};
% flows
\draw[-{Stealth[length=1.8mm]},paperteal,semithick]
  (hid.north) to[out=75,in=140] node[above,font=\scriptsize,paperteal,pos=0.45]
  {group features $s_g$ (stop-grad)} (ctrl.north west);
\draw[-{Stealth[length=1.8mm]},paperteal,semithick]
  (ctrl.west) -- node[above,font=\scriptsize,paperteal]
  {correlated noise $\gamma$} (loss.east);
% score-function gradient
\draw[-{Stealth[length=1.8mm]},paperred,thick,dashed]
  (loss.south) to[out=-70,in=-120] (ctrl.south);
\node[paperred,font=\scriptsize,align=center] at (7.6,-2.15)
  {$\nabla_{\!\phi}\,\E[L]=\E\big[(L-b)\,\nabla_{\!\phi}\log p_\phi(Y)\big]$
   --- score function, no backprop through Top-$K$};
\end{tikzpicture}
\caption{Routing-only adaptation.  The frozen base model is evaluated only
in the forward direction; the controller reads stop-gradient group features
$s_g$, sets the dependence dials $(\rho_g,\alpha_{g,g'})$ of the correlated
noise, and is trained by the score-function estimator
\eqref{eq:score-function}, whose gradient goes \emph{around} the discrete
Top-$K$ map rather than through it.  Trainable parameters range from one
scalar per MoE layer to a small MLP.}
\label{fig:controller}
\end{figure}

\paragraph{Score-function estimator.}
Rather than differentiating through the discrete selection, the estimator
reweights each sampled noise configuration by its centered loss and adjusts
the controller to raise or lower the probability of that configuration
accordingly. The term $\nabla_\phi\log p_\phi(Y\mid x)$ measures the
sensitivity of the sampled correlated noise's log-density to the controller's
coupling strength, and the baseline $b(x)$ subtracts a reference loss to reduce
the variance of the update. The frozen base model contributes only the forward
loss, and all trainable gradients terminate at the controller.

\paragraph{The density for the hierarchical sampler.}
For an unpaired group of $m$ tokens and one expert coordinate, the covariance
is $(1-\rho_g)I_m+\rho_g\mathbf{1}\mathbf{1}^{\mathsf T}$.  For an
equal-size paired group $(g,g')$, the corresponding $2m$-dimensional
covariance is
\begin{equation}
 \Sigma_{g,g'} =
 \begin{bmatrix}
 (1-\rho_g)I_m+\rho_g\mathbf{1}\mathbf{1}^{\mathsf T}
 &-\alpha_{g,g'}\sqrt{\rho_g\rho_{g'}}\,\mathbf{1}\mathbf{1}^{\mathsf T}\\
 -\alpha_{g,g'}\sqrt{\rho_g\rho_{g'}}\,\mathbf{1}\mathbf{1}^{\mathsf T}
 &(1-\rho_{g'})I_m+\rho_{g'}\mathbf{1}\mathbf{1}^{\mathsf T}
 \end{bmatrix}.
 \label{eq:paired-covariance}
\end{equation}
The full density $p_\phi$ is the product of these Gaussian blocks across
expert coordinates and disjoint groups or pairs.  It is nonsingular because
$\rho_{\max}<1$.  Thus \eqref{eq:score-function} applies to both the flat and
tunable paired variants when the relevant block log-density is used.  Pairing
itself remains fixed before sampling; learning a discrete pairing would
require a separate discrete gradient estimator.

Readers need not use this covariance expression to understand or run the
fixed-coupling router.  It is included to make the learning claim precise:
when paired groups are present, their noises must be scored jointly rather
than as two independent groups.  At $\alpha_{g,g'}=0$ the off-diagonal blocks
vanish and the density reduces to independent group blocks; at
$\alpha_{g,g'}=1$ it is the fully antithetic density.

\paragraph{The baseline used in the pilot.}
Equation~\eqref{eq:score-function} is exactly unbiased for a baseline that
does not depend on the scored sample's routing noise.  The small pilot uses
the batch-mean loss as a practical baseline.  Because that mean includes the
scored sample, its expectation is a $(1-1/B)$-scaled version of the
score-function gradient for batch size $B$; a leave-one-out baseline would
remove this finite-batch scaling.  This choice affects optimization
efficiency, not the routing-law theorem.

A biased continuous Top-$K$ relaxation may instead be used during
optimization, while evaluating the exact discrete rule.  The routing-law
results concern the exact execution distribution, not a training relaxation.
The controller may set both within-group strengths $\rho_g$ and paired-group
opposition strengths $\alpha_{g,g'}$.

\paragraph{Where a learning signal can arise.}
Marginal preservation means that an objective depending only on one token's
isolated routing distribution cannot identify $\phi$: that distribution does
not change.  Any useful signal must arise from joint effects, for example
several routed positions interacting through later attention, multiple MoE
layers compounding, or an explicitly joint routing objective.  Consequently,
ordinary per-token cross-entropy can provide a weak or noisy signal in a
fully frozen model.  This is a prediction of the construction, not a failure
of the estimator, and motivates the cautious pilot in
Section~\ref{sec:pilot}.

%==============================================================================
\section{Related Work}\label{sec:related}
%==============================================================================
\paragraph{MoE routing.}
Sparse MoE systems commonly use top-$K$ selection or capacity-aware assignment
\citep{shazeer2017outrageously,lepikhin2021gshard,fedus2022switch,
zhou2022mixture,lewis2021base}.  SeqTopK redistributes a sequence-level expert
budget \citep{wen2025seqtopk}; it changes the number of experts assigned to
individual tokens.  Similarity-aware routers modify gates or selections using
token relationships \citep{nguyen2025graphtokens,omi2025simbal}.  \CGA{}
addresses a different degree of freedom: it holds each token's stochastic
Top-$K$ routing law fixed and modifies only joint assignments across tokens,
in either direction.

\paragraph{Routing-aware PEFT.}
LoRA \citep{hu2022lora} and related PEFT methods adapt weights while freezing
most of a backbone.  Routed PEFT learns or reuses routing to activate adapter
modules \citep{liu2026perft}.  These approaches allocate trainable
\emph{representational} capacity.  The controller of
Section~\ref{sec:adaptation} instead adds no expert-weight adapter: it is a
small trainable map into the dependence structure of a frozen base router's
sampling noise.

\paragraph{Dependent discrete sampling and variance reduction.}
The Gumbel-Top-$K$ trick samples a ranked set without replacement
\citep{kool2019gumbel}; \citet{huijben2022gumbel} survey structured
extensions of the Gumbel-max mechanism.  Copulas separate joint dependence
from marginals \citep{sklar1959fonctions,nelsen2006copulas}.  Antithetic
variates are a classical variance-reduction device
\citep{hammersley1956antithetic}, and antithetic constructions also underlie
low-variance gradient estimators for discrete variables \citep{yin2019arm};
the association inequality of \citet{esary1967association} supplies the
monotonicity argument we use.  Our construction combines these elements at
the routing-noise level of an MoE: coordinate-wise copulas induce cross-token
dependence of either sign while retaining the i.i.d.\ Gumbel vector required
by each token's original Gumbel-Top-$K$ law.

%==============================================================================
\section{Initial Frozen-Base Pilot}\label{sec:pilot}
%==============================================================================
This pilot is deliberately small.  Its purpose is to test whether the exact
sampler, the invariance checks, and the controller-only training route work
together.  It is \emph{not} evidence that routing-only adaptation improves a
pretrained MoE on a downstream task.

\paragraph{Setup.}
We trained one 15.8M-parameter, six-layer decoder-only Top-2 MoE language
model on 10M TinyStories tokens.  It has eight experts in each of three MoE
layers and processes sequences of length 256.  Groups are fixed,
non-overlapping windows of $m=4$ adjacent tokens; within-window Jaccard means
the average Top-2 set Jaccard similarity of adjacent positions inside those
windows.  We then froze every base parameter and evaluated the copula
mechanism on the held-out validation split.  The main fixed-coupling
comparison uses $\rho=0.6$ and $\alpha=0$, so the reported full-model rows
measure lower-level positive coupling only.

The learned-controller experiment is intentionally a restricted special case
of Section~\ref{sec:adaptation}: it learns one constant $\rho$ per MoE layer
(three parameters total), rather than a feature-conditioned, per-group map
$\phi(s_g)$.  It trains for 2M additional in-distribution tokens using the
score-function route in \eqref{eq:score-function}, $\rho_{\max}=0.95$, and
three seeds.  The learned-controller and table rows use $\alpha=0$; the
higher-level dial is evaluated separately below with fixed coupling only, so
no run in this pilot measures a learned $\alpha$ or an end-to-end
variance-reduction benefit.

\paragraph{Routing-law checks.}
In a separate synthetic fixed-logit test with four tokens, 60,000 draws, and
$\rho=0.8$, the largest difference between the empirical frequency of an
ordered Top-2 list under independent and copula routing was $0.0035$; the
largest difference in an expert-inclusion frequency was
$2.2\times10^{-4}$.  These are Monte-Carlo checks, not proofs;
Theorem~\ref{thm:law} gives the exact conditional result.  In this deliberately
similar-logit synthetic setting, adjacent-token Top-2 Jaccard overlap rose
from $0.53$ to $0.77$.  These values are not directly comparable with the
full-model metrics in Table~\ref{tab:pilot}.

We also test the new higher-level dial on identical synthetic logits, which
isolate the shared-noise effect from gate differences.  At
$\rho=0.6$, paired-boundary Jaccard overlap decreases from $0.389$ at
$\alpha=0$ to $0.303$ at $\alpha=0.5$ and $0.218$ at $\alpha=1$, while
within-window overlap remains approximately constant ($0.601$, $0.601$, and
$0.600$).  A separate $\alpha=1$ law check gives maximum ordered-list and
inclusion-frequency deviations of $0.0028$ and $0.0034$, respectively.  These
synthetic checks validate that $\alpha$ changes the intended cross-group
statistic without altering the observed per-token law; they are not
end-to-end evidence of load balancing.

The same pattern holds end to end on the frozen model.  A single-seed
evaluation-only sweep over $\rho\in\{0.6,0.9\}$ and
$\alpha\in\{0,0.5,1\}$ lowers paired-window-boundary Jaccard from $0.210$ to
$0.170$ to $0.137$ at $\rho=0.6$, and from $0.210$ to $0.152$ to $0.110$ at
$\rho=0.9$, while within-window Jaccard is unchanged to within $3\times
10^{-4}$ ($0.314$ and $0.389$, respectively) and validation cross-entropy
varies by at most $3.1\times10^{-4}$ across all six cells.  The $\alpha$ dial
therefore moves only its target statistic in a full multi-layer model as
well.  Aggregate load CV again changes very little, and, as above, that
summary does not test Proposition~\ref{prop:variance}.

\begin{table}[t]
\centering
\caption{Initial frozen-base pilot.  Values after $\pm$ are standard
deviations over three seeds; the two fixed rows are one
mechanism-check run each.  The pilot is not a task-adaptation benchmark.}
\label{tab:pilot}
\small
\begin{tabular}{lrrrrr}
\toprule
Method & Trainable & Validation & Within-window & Distinct experts & Observed load \\
& parameters & CE & Jaccard & per window & CV \\
\midrule
Independent ($\rho=0$) & 0 & 3.12832 & 0.209 & 5.277 & 0.15946 \\
Fixed copula ($\rho=0.6$) & 0 & 3.12816 & 0.314 & 4.605 & 0.15950 \\
Learned scalar controller & 3 & $3.12837\pm0.00003$ &
  $0.228\pm0.004$ & $5.132\pm0.027$ & $0.15907\pm0.00012$ \\
Router-LoRA reference & 1,584 & $3.12439\pm0.00034$ &
  $0.208\pm0.001$ & $5.282\pm0.005$ & $0.15055\pm0.00457$ \\
\bottomrule
\end{tabular}
\end{table}

\paragraph{Findings and limits.}
Fixed positive coupling substantially changes the \emph{joint} routing
statistics: it raises within-window overlap and reduces the number of
distinct experts used in a window.  The observed validation cross-entropy
difference is small in this one-seed check, but layer-local routing-law
invariance does not predict sequence-level cross-entropy invariance.  This is
mechanism evidence, not a quality-improvement claim.  The observed aggregate
load CV also changes very little.  It is a finite-sample summary across
experts, rather than a direct estimate of the conditional variance in
Proposition~\ref{prop:variance}, so it does not test that proposition's
signed comparison.

The three-scalar controller trains stably and preserves aggregate inclusion
frequencies to within $2.4\times10^{-4}$ of the independent estimate in this
evaluation.  This is an empirical observation, not an end-to-end invariance
guarantee.  Its learned strengths range from $0.04$ to $0.27$ without a
reproducible pattern, and it provides no validation cross-entropy gain.  The
much smaller Jaccard change than the fixed-$\rho$ row is consistent with the
learned strengths staying well below $0.6$ and with the weak joint signal
discussed above.  Router-LoRA attains lower cross-entropy, but it has 528
times as many trainable parameters, optimizes the base balance regularizer in
addition to cross-entropy, and changes the routing distribution itself: its
maximum observed inclusion-frequency shift is $5.8\times10^{-3}$.  It is
therefore an illustrative conventional routing-adaptation reference, not a
matched-budget or matched-objective control.

The pilot leaves the central empirical questions open: whether learned
dependence helps on a real domain shift, whether an $\alpha$ sweep reduces
capacity overflows in practice, and whether fewer distinct experts translate
into measurable hardware savings.  A paper-level evaluation should answer
these questions on a pretrained stochastic Top-$K$ MoE with capacity-aware
measurements.

%==============================================================================
\section{Discussion and Limitations}\label{sec:discussion}
%==============================================================================
\CGA{} is a dependence layer for stochastic routers, not a universal drop-in
replacement for every MoE.  It is exactly plug-compatible only with a
stochastic Gumbel-Top-$K$ base router; converting a deterministic pretrained
Top-$K$ router to stochastic Gumbel-Top-$K$ changes its base behavior even at
zero coupling.  Capacity clipping, token dropping, and expert-choice
allocation act after sampling and are outside the invariance results; under a
hard capacity, the burstiness signed by Proposition~\ref{prop:variance}(i) is
exactly the quantity that causes overflow, which is one motivation for the
higher-level opposition dial.  The proposition signs but does not quantify either
variance effect, and for heterogeneous gate distributions the magnitudes may
be small.  The initial pilot in Section~\ref{sec:pilot} checks the
positive-coupling mechanism and the $\alpha$ dial's routing statistics, both
synthetically and end to end on the frozen model, but it does not measure
conditional load variance, capacity overflows, or a systems benefit.  Finally, the method requires $E$
Gumbel perturbations per token;
although routers usually already score all experts, the sampling and
cross-token coordination overhead must be measured.  Marginal preservation is
a safety and identifiability property, not an accuracy theorem: it does not
by itself imply better task performance, communication, specialization, or
realized load balance.

In summary, holding every token's Top-$K$ routing law fixed leaves a usable
design space---the joint dependence of routing across tokens. A hierarchical
copula reaches both poles of this space, coherence within groups and
dispersion across them, with provable invariance of every per-token routing
quantity and a signed characterization of the load-variance trade-off, and it
exposes these controls to a small controller trainable without updating the
frozen base. These guarantees are conditional and routing-layer-local, and do
not by themselves render a multi-layer MoE end-to-end invariant.

%==============================================================================
\section*{Appendix: Proof Details}
\paragraph{Independence across expert coordinates.}
Independence across expert coordinates is essential in Theorem~\ref{thm:law}.
Correlating $\gamma_{te}$ and $\gamma_{te'}$ within a token would alter the
Gumbel-Top-$K$ ranking law and invalidate the theorem.  Similarly, choosing
group membership, the pairing, $\rho_g$, or $\alpha_{g,g'}$ after observing
routing noise can select on the random variables and need not preserve their
uniform/Gumbel margins; the controller and pairing must be measurable with
respect to frozen, pre-noise quantities only.  The paired-latent rule
\eqref{eq:paired-latent} must likewise act coordinate-wise: coupling
$\zeta_{g'e}$ to $\zeta_{ge'}$ for $e\neq e'$ would couple expert coordinates
within a token and break the theorem.

\paragraph{Proof of Proposition~\ref{prop:variance}.}
Fix expert $e$ and condition throughout on logits, groups, pairings, and
strengths.  For a group $g$, collect its latents into
$\zeta_g=(\zeta_{g1},\ldots,\zeta_{gE})$ and define the transformed vector
$W_g$ by $W_{ge}=\zeta_{ge}$ and $W_{ge'}=-\zeta_{ge'}$ for $e'\neq e$.  The
coordinates of $W_g$ are independent symmetric standard normals.  For a token
$t\in g$, the indicator $\ind[e\in S_t]$ is, for fixed values of all other
noise, nondecreasing in the perturbed logit $\ell_{te}+\gamma_{te}$ and
nonincreasing in each competitor's perturbed logit; since $\gamma_{te}$ is
increasing in $y_{te}$, which is increasing in $\zeta_{ge}$, and each
competitor coordinate is increasing in $\zeta_{ge'}$, the conditional mean
$\varphi_t(W_g)=\Prob(e\in S_t\mid W_g)$ (integrating out the token noises
$\epsilon$) is nondecreasing in every coordinate of $W_g$.  Given $W_g$, the
indicators of distinct tokens in $g$ are conditionally independent, because
they involve disjoint sets of $\epsilon$ draws.

(i) For $t\neq t'$ in $g$,
$\operatorname{Cov}(\ind[e\in S_t],\ind[e\in S_{t'}])
=\operatorname{Cov}(\varphi_t(W_g),\varphi_{t'}(W_g))\geq 0$, since
independent coordinates are associated and both functions are coordinatewise
nondecreasing \citep{esary1967association}.  Hence
$\operatorname{Var}(X_g)\geq\sum_{t\in g}\operatorname{Var}(\ind[e\in S_t])$,
and the individual variances equal their independent-routing values by
Theorem~\ref{thm:law}.  Groups are independent under flat coupling, so
summing over $g$ proves (i).

(ii) Under the paired rule \eqref{eq:paired-latent}, let $V_{g'}$ be the
transformed independent standard-normal vector formed from $\eta_{g'e}$.
\[
  W_{g'}=-\alpha_{g,g'}W_g
  +\sqrt{1-\alpha_{g,g'}^2}\,V_{g'}.
\]
Given $(W_g,V_{g'})$, the counts $X_g$ and $X_{g'}$ are conditionally
independent because their $\epsilon$ draws are disjoint.  Their conditional
means are $\Phi_g(W_g)=\sum_{t\in g}\varphi_t(W_g)$, which is
coordinatewise nondecreasing, and
$\Phi_{g'}(-\alpha_{g,g'}W_g+\sqrt{1-\alpha_{g,g'}^2}V_{g'})$.  Averaging the
latter over $V_{g'}$ gives a function of $W_g$ that is coordinatewise
nonincreasing for $\alpha_{g,g'}\geq0$.  Therefore
\[
\begin{aligned}
  \operatorname{Cov}(X_g,X_{g'})
  &=\operatorname{Cov}\!\left(
    \Phi_g(W_g),
    \E_{V_{g'}}[
      \Phi_{g'}(-\alpha_{g,g'}W_g
      +\sqrt{1-\alpha_{g,g'}^2}V_{g'})
      \mid W_g]\right)\\
  &\leq 0
\end{aligned}
\]
by the same association inequality applied to a coordinatewise
nondecreasing and a coordinatewise nonincreasing function.  The marginal law
of each group is unchanged by the pairing, distinct pairs are independent,
and unpaired covariances vanish, so
$\operatorname{Var}(N_e)=\sum_g\operatorname{Var}(X_g)
+2\sum_{(g,g')}\operatorname{Cov}(X_g,X_{g'})$ is at most its flat-coupling
value.  Expected loads agree in all schemes by
Corollaries~\ref{cor:load} and~\ref{cor:hier}, conditional on the fixed logits
in this proof. \qed

\bibliographystyle{plainnat}
\bibliography{references}
\end{document}